%% file: main.tex
\documentclass[10pt,letterpaper]{article}
\usepackage[letterpaper,textwidth=5.5in,textheight=9in]{geometry}
\usepackage{times,natbib}
\usepackage{amsmath,amssymb,mathtools,bm,amsthm}
\newtheorem{lemma}{Lemma}
\usepackage{microtype}
\usepackage{graphicx}
\usepackage[caption=false]{subfig}
\input{tables/results_macros}

\input{tables/bootstrap_macros}
\input{tables/adaptive_bootstrap_macros}

\input{tables/execution_score_macros}

\usepackage{booktabs,tabularx,array}
\usepackage{float}
\floatstyle{ruled}
\newfloat{algorithm}{tbp}{loa}
\floatname{algorithm}{Algorithm}
\usepackage{xcolor}
\usepackage[colorlinks=true,linkcolor=blue!40!black,citecolor=blue!40!black,urlcolor=blue!40!black]{hyperref}
\hypersetup{pdftitle={Role-Adaptive Policy Optimization for Offline Reinforcement Learning},pdfauthor={Seonvin Cho, Soohyun Choi, Songnam Hong}}
\usepackage{enumitem}
\usepackage{needspace}
\newcommand{\E}{\mathbb{E}}
\newcommand{\D}{\mathcal{D}}
\newcommand{\Bin}{\mathcal{B}_{\mathrm{in}}}
\newcommand{\Bout}{\mathcal{B}_{\mathrm{out}}}
\newcommand{\sg}{\operatorname{sg}}

\newcommand{\softplus}{\operatorname{softplus}}
\title{Role-Adaptive Policy Optimization\\for Offline Reinforcement Learning}
\author{Seonvin Cho \qquad Soohyun Choi \qquad Songnam Hong\thanks{Corresponding author: \texttt{snhong@hanyang.ac.kr}.}\\[4pt]
Department of Electronic Engineering, Hanyang University\\
Seoul, Republic of Korea\\[3pt]
\texttt{seonbin0319@hanyang.ac.kr} \quad \texttt{petersun0221@hanyang.ac.kr}\\
\texttt{snhong@hanyang.ac.kr}}
\date{September 2026}

\begin{document}
\maketitle

\begin{abstract}
Policy regularization in offline reinforcement learning balances policy improvement against reliance on uncertain value estimates. This balance can differ between selecting actions for execution and supplying actions for critic bootstrapping, yet methods such as TD3+BC couple these roles through a shared policy. We propose Role-Adaptive Policy Optimization (RAPO), which adapts policy-update coefficients according to their roles in value learning and execution. RAPO learns these coefficients by differentiating through candidate policy updates formed using the base algorithm's actor objective. For TD3+BC, RAPO separates bootstrap and execution actors and adapts their coefficients independently: the bootstrap objective penalizes policy-induced changes in target values, while the execution objective evaluates a local policy-improvement surrogate. For IQL, whose value learning is already independent of the execution actor, RAPO preserves the original value updates and adapts only the inverse temperature in advantage-weighted policy extraction. Experiments on D4RL locomotion and AntMaze tasks show improvements over both base algorithms, with larger gains for TD3+BC, whose RAPO instantiation outperforms baselines on average.
\end{abstract}

\section{Introduction}
\label{sec:introduction}

Many offline reinforcement learning (RL) methods improve a policy by balancing critic-guided updates against a reference derived from recorded behavior \citep{fujimoto2021minimalist,tarasov2023rebrac}. The critic is learned from a fixed dataset, so its estimates can be unreliable for actions poorly covered by that dataset \citep{levine2020offline}. The regularization coefficient controls how strongly an update follows these estimates. A setting that is useful at one stage of training may become overly restrictive or permissive as the actor and critic evolve.

The appropriate balance also depends on the role of the policy. In TD3+BC, one actor selects actions for execution and, through its target copy, supplies next-state actions for critic bootstrapping. An update toward higher estimated values can improve the execution policy while changing the critic's regression targets. Conversely, limiting target movement can restrict useful execution updates. These two uses therefore need not favor the same regularization, even when they share a critic.

IQL learns values without using its extracted actor, and methods with separate target and evaluation policies allow different constraint strengths \citep{kostrikov2022iql,xu2024mcep}. Policy extraction can substantially affect performance for a given value function \citep{hansenestruch2023idql,park2024bottleneck}. Adaptive-constraint methods can learn regularization coefficients through differentiable actor updates \citep{jing2026aspc}. The evaluation criterion for such an update should reflect whether the policy supplies execution actions or critic targets.

We propose \emph{Role-Adaptive Policy Optimization} (RAPO), which learns regularization coefficients by evaluating candidate policy updates according to how their actions are used. Execution updates are assessed for local policy improvement, while bootstrap updates are assessed for the stability of the critic targets they produce. RAPO differentiates through a candidate update to choose its coefficient and retains the base algorithm's actor objective for the actual update.

We instantiate this procedure in TD3+BC and IQL. TD3+RAPO separates execution and bootstrap actors and learns their coefficients independently. IQL learns Q and V from dataset actions without using its execution actor in either update \citep{kostrikov2022iql}. IQL+RAPO therefore adapts only the inverse temperature of advantage-weighted policy extraction. An explicit bootstrap actor is used when the base method obtains critic targets from policy actions.

The contribution is a role-specific criterion for learning each policy-update coefficient while retaining the base actor losses. Across fifteen D4RL tasks, both instantiations improve the domain averages of their base algorithms, and TD3+RAPO has the highest overall average among the compared methods. Matched TD3 controls examine the execution score and bootstrap adaptation; a fixed-coefficient control tests whether learned endpoint values reproduce adaptation during training.

\section{Background}
\label{sec:background}

\subsection{Offline Learning Setup}

The training experience in offline RL is a fixed dataset $\D=\{(s,a,r,s')\}$ collected by behavior policies \citep{levine2020offline}. Each tuple records a state $s\in\mathcal S$, action $a\in\mathcal A$, reward $r$, and successor state $s'$. Learning uses these records without further environment interaction.

We seek a policy $\pi$ that maximizes $J(\pi)=\E_{\rho_0,P,\pi}[\sum_{t=0}^{\infty}\gamma^t r(s_t,a_t)]$, where $\rho_0$ is the initial-state distribution, $P$ is the environment transition kernel, $r$ is the reward function, and $\gamma\in[0,1)$ is the discount factor. The actor-critic methods below represent the policy by $\pi_\theta$ and estimate action and state values with $Q_\phi$ and $V_\psi$, respectively.

\subsection{Base Actor-Critic Updates}
\label{sec:baseupdates}

TD3+BC and IQL differ in their actor objectives and in the actions used for critic bootstrapping. Let $\bar Q_j$ denote target critic $j$ and $\bar Q_{\min}=\min_{j=1,2}\bar Q_j$ their pointwise minimum. Bars consistently mark target critics. Let $\sg$ denote the stop-gradient operator. We write the losses for nonterminal transitions; implementation details are given in Appendix~\ref{app:training}.

\paragraph{TD3+BC.}
TD3+BC combines Q maximization with behavior cloning \citep{fujimoto2021minimalist}. In the loss convention used by RAPO, its actor minimizes
\begin{equation}
\label{eq:td3inner}
\mathcal L_\pi^{\mathrm{TD3+BC}}(\theta;\alpha)
=\E_{(s,a)\sim\D}\left[
-\frac{Q_{\phi_1}(s,\textcolor{red!65!black}{\pi_\theta(s)})}{S}
+\frac{\|\textcolor{red!65!black}{\pi_\theta(s)}-a\|_2^2}{\alpha d_a}
\right],
\end{equation}
where $\alpha>0$, $d_a$ is the action dimension, and
$S=\sg(\max\{\E_{s\sim\D}|Q_{\phi_1}(s,\pi_\theta(s))|,\epsilon\})$
for a small $\epsilon>0$. Increasing $\alpha$ weakens behavior regularization relative to Q maximization. The critic loss is
\begin{equation}
\label{eq:td3basetarget}
\mathcal L_Q^{\mathrm{TD3}}(\phi)
=\sum_{j=1}^{2}\E_{(s,a,r,s')\sim\D}
\left[\left(Q_{\phi_j}(s,a)
-\sg\!\left(r+\gamma\bar Q_{\min}(s',\textcolor{blue!65!black}{\bar\pi_\theta(s')})\right)\right)^2\right].
\end{equation}
In TD3+BC, the actor in Equation~\eqref{eq:td3inner} selects execution actions; its target copy in Equation~\eqref{eq:td3basetarget} selects the next-state actions used for critic bootstrapping. RAPO assigns these two uses to separate actors.

\paragraph{IQL.}
IQL learns values from dataset actions independently of its actor \citep{kostrikov2022iql}:
\begin{align}
\label{eq:iqlvalue}
\mathcal L_V(\psi)
&=\E_{(s,a)\sim\D}\ell_\tau\!\left(\sg(\bar Q_{\min}(s,a))-V_\psi(s)\right),\\
\label{eq:iqltarget}
\mathcal L_Q^{\mathrm{IQL}}(\phi)
&=\frac12\sum_{j=1}^{2}\E_{(s,a,r,s')\sim\D}
\left[\left(Q_{\phi_j}(s,a)-\sg\!\left(r+\gamma V_\psi(s')\right)\right)^2\right],
\end{align}
where $\ell_\tau(u)=|\tau-\mathbf1\{u<0\}|u^2$ and $\tau\in[1/2,1)$ select an upper expectile (the mean at $\tau=1/2$). Given the detached advantage $A(s,a)=\sg(\bar Q_{\min}(s,a)-V_\psi(s))$, the actor minimizes
\begin{equation}
\label{eq:iqlactor}
\mathcal L_\pi^{\mathrm{IQL}}(\theta;\beta)
=-\E_{(s,a)\sim\D}\left[\exp(\beta A(s,a))\log\textcolor{red!65!black}{\pi_\theta(a\mid s)}\right].
\end{equation}
The inverse temperature $\beta>0$ controls how strongly policy extraction favors higher-advantage actions.

\subsection{Meta-Gradient Learning}
\label{sec:metabackground}

Meta-gradient learning trains a coefficient by differentiating through the update it controls \citep{xu2018metagradient}. For an actor loss $\mathcal L_{\mathrm{inner}}$ and dataset minibatch $\Bin$, an \emph{inner update} changes $\theta$ with $\alpha$ fixed. A gradient step with learning rate $\eta$ gives
\[
\widetilde\theta(\alpha)
=\theta-\eta\nabla_\theta\mathcal L_{\mathrm{inner}}(\theta;\alpha,\Bin).
\]
An \emph{outer loss} $\mathcal L_{\mathrm{outer}}$ scores the updated actor on another minibatch $\Bout$. Holding the starting parameters $\theta$ fixed, the chain rule gives
\begin{equation}
\label{eq:meta}
\frac{\mathrm d\mathcal L_{\mathrm{outer}}(\widetilde\theta(\alpha);\Bout)}{\mathrm d\alpha}
=\left(\frac{\partial\widetilde\theta}{\partial\alpha}\right)^{\!\top}
\nabla_{\widetilde\theta}\mathcal L_{\mathrm{outer}}.
\end{equation}
Gradient descent on the outer loss trains $\alpha$: the inner loss trains the policy, and the outer loss trains its update coefficient.

\section{Method}
\label{sec:method}

RAPO applies the meta-gradient procedure in Section~\ref{sec:metabackground} with outer losses tailored to execution and critic bootstrapping (Figure~\ref{fig:framework}).

\subsection{Policy Roles and Adaptive Coefficients}
\label{sec:roles}

The execution policy $\pi_E$ selects actions in the environment. When value learning requires a policy to supply next-state actions, RAPO assigns that task to a separate bootstrap policy $\pi_B$. For each active role $i$, write $c_i$ for its method-specific positive actor-loss coefficient:
\begin{center}
\begin{tabular}{lccc}
\toprule
Action use & Actor & Coefficient & Outer criterion \\
\midrule
Execution & $\pi_E=\pi_{\theta_E}$ & $c_E$ & Local value improvement \\
Critic-target actions & $\pi_B=\pi_{\theta_B}$ & $c_B$ & Target stability \\
\bottomrule
\end{tabular}
\end{center}
TD3+RAPO uses both actors with shared critics, with $c_E=\alpha_E$ and $c_B=\alpha_B$: $\pi_E$ is executed, while $\pi_B$ supplies the actions in Equation~\eqref{eq:td3basetarget}. IQL's target in Equation~\eqref{eq:iqltarget} is actor-independent, so its only required role is execution, with $c_E=\beta_E$.

For each active role $i$, the inner loss $\mathcal L_{\mathrm{inner},i}$ is the corresponding actor objective from Section~\ref{sec:baseupdates}: Equation~\eqref{eq:td3inner} with $\alpha=c_i$ for TD3+BC, or Equation~\eqref{eq:iqlactor} with $\beta=c_E$ for IQL.

Let $\mathcal U_i(\theta_i;c_i,\Bin)$ denote one differentiable optimizer step on $\mathcal L_{\mathrm{inner},i}$ from $\theta_i$, using coefficient $c_i$ and minibatch $\Bin$. The resulting candidate parameters and policy are
\begin{equation}
\label{eq:virtual}
\widetilde\theta_i(c_i)
=\mathcal U_i(\theta_i;c_i,\Bin),
\qquad \widetilde\pi_i=\pi_{\widetilde\theta_i(c_i)}.
\end{equation}
The outer losses evaluate $\widetilde\pi_i$ on an independently drawn minibatch $\Bout$ and update $c_i$ through Equation~\eqref{eq:meta}. Positive, detached scales $\bar S_i$ normalize the scores by mean absolute critic values at candidate actions. Appendix~\ref{app:differentiation} specifies the scales and stop-gradient operations; Appendix~\ref{app:training} gives the optimizers and schedules.

\begin{figure}[H]
\centering
\subfloat[TD3+RAPO\label{fig:framework-td3}]{%
  \includegraphics[height=0.285\textwidth]{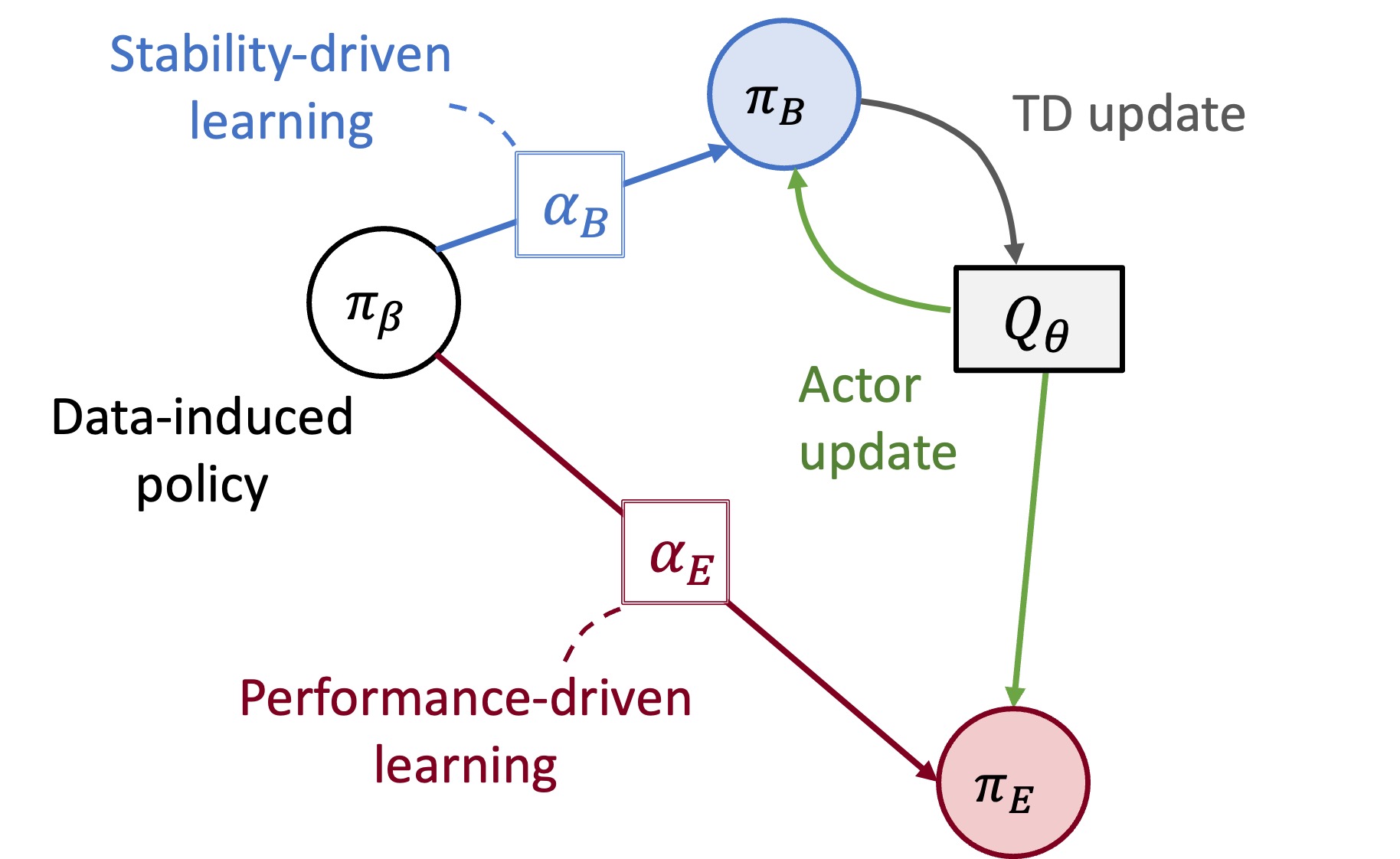}%
}\hfill
\subfloat[IQL+RAPO\label{fig:framework-iql}]{%
  \includegraphics[height=0.285\textwidth]{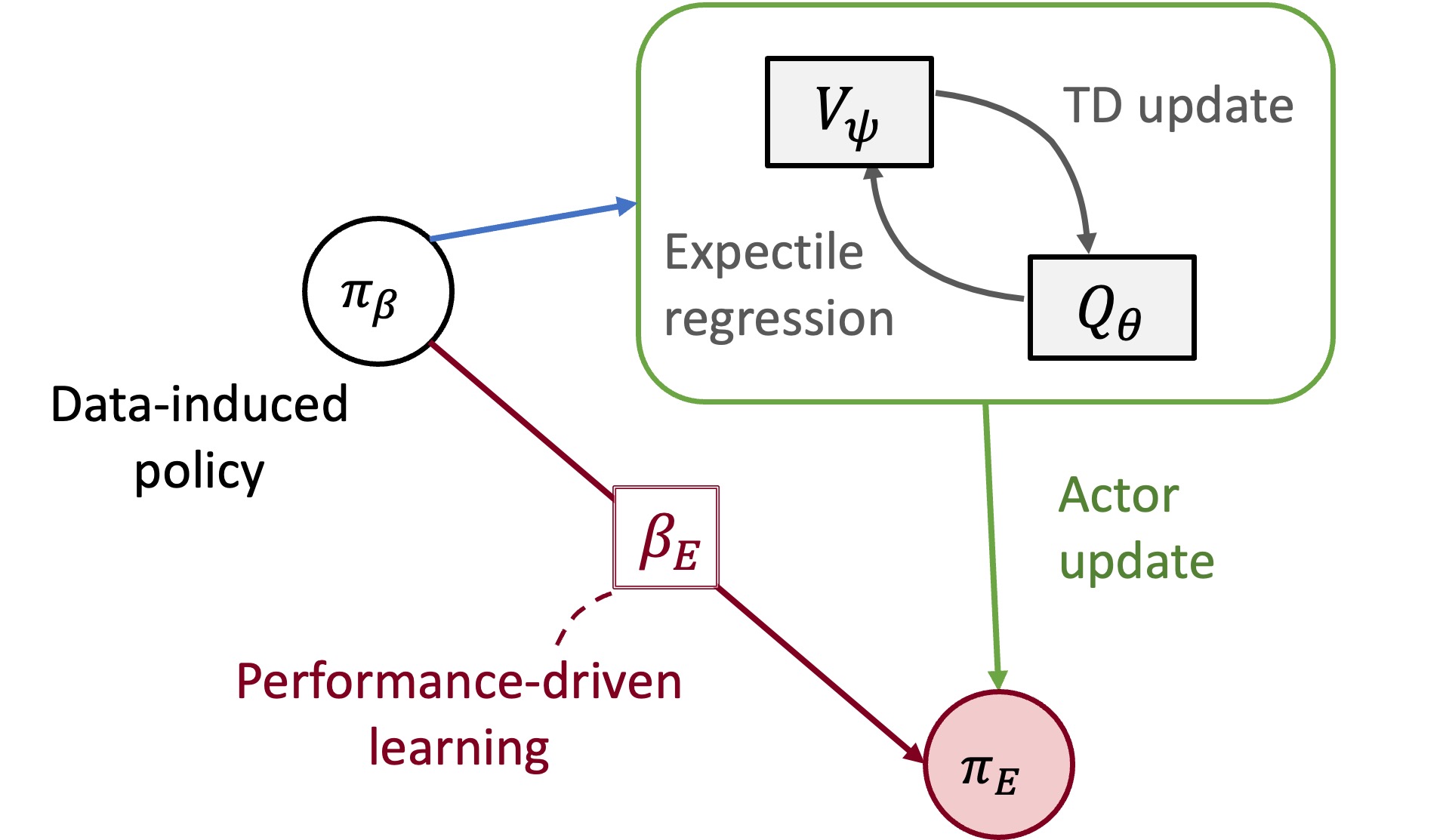}%
}
\caption{\textbf{Overview of RAPO.} $\pi_\beta$ denotes the behavior represented by the offline data. (a) TD3+RAPO separates the bootstrap actor $\pi_B$ from the execution actor $\pi_E$ and adapts their coefficients $\alpha_B$ and $\alpha_E$ using target-stability and local-improvement objectives, respectively. (b) IQL+RAPO retains actor-independent Q/V learning from dataset transitions and adapts only the execution policy's inverse temperature $\beta_E$.}
\label{fig:framework}
\end{figure}

\subsection{Execution Objective: Local Policy Improvement}
\label{sec:execution}

The performance-difference identity relates return improvement to the candidate policy's action-value gain under $Q^{\pi_E}$, averaged over its discounted state distribution (Appendix~\ref{app:return}). RAPO approximates this criterion using a learned critic and dataset states, and scores the local change from the current action to the candidate action.

Write $\bar Q_E$ for the target critic used to score execution: $\bar Q_E=\bar Q_1$ in TD3+RAPO and $\bar Q_E=\bar Q_{\min}$ in IQL+RAPO. It is held fixed while scoring an update. Here $\pi_E$ denotes the action map used in evaluation, including the mean for Gaussian actors.

For a state $s$, write $a_0=\pi_E(s)$, $a_+=\widetilde\pi_E(s)$, and $\delta a=a_+-a_0$. A standard smoothness bound motivates an update term that rewards first-order improvement and a conservatism term that accounts for curvature.

\begin{lemma}[Local critic improvement]
\label{lem:local}
Fix $s$. Suppose $\bar Q_E(s,\cdot)$ is differentiable on a neighborhood of the segment from $a_0$ to $a_+$ and its action gradient is $L_s$-Lipschitz on that segment. Then
\begin{equation}
\label{eq:smoothbound}
\bar Q_E(s,a_+)-\bar Q_E(s,a_0)
\geq \nabla_a\bar Q_E(s,a_0)^\top\delta a-\frac{L_s}{2}\|\delta a\|_2^2.
\end{equation}
\end{lemma}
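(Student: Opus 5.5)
The plan is the standard descent-lemma argument for functions with Lipschitz gradient, applied in the action variable with $s$ held fixed. Define the scalar function $g(t)=\bar Q_E(s,a_0+t\,\delta a)$ for $t\in[0,1]$. By hypothesis, $\bar Q_E(s,\cdot)$ is differentiable on a neighborhood of the segment $\{a_0+t\,\delta a:t\in[0,1]\}$. Hence $g$ is differentiable with $g'(t)=\nabla_a\bar Q_E(s,a_0+t\,\delta a)^\top\delta a$.

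The first step is to write the increment as an integral, $g(1)-g(0)=\int_0^1 g'(t)\,\mathrm dt$. This requires $g'$ to be integrable and the fundamental theorem of calculus to apply. Both follow because the action gradient is $L_s$-Lipschitz on the segment, so $g'$ is continuous (indeed Lipschitz) on $[0,1]$.

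The second step is to add and subtract the gradient at the starting action:
\[
\bar Q_E(s,a_+)-\bar Q_E(s,a_0)
=\nabla_a\bar Q_E(s,a_0)^\top\delta a
+\int_0^1\bigl(\nabla_a\bar Q_E(s,a_0+t\,\delta a)-\nabla_a\bar Q_E(s,a_0)\bigr)^\top\delta a\,\mathrm dt .
\]
Then bound the integrand from below using Cauchy--Schwarz and the Lipschitz property of the gradient along the segment:
\[
\bigl(\nabla_a\bar Q_E(s,a_0+t\,\delta a)-\nabla_a\bar Q_E(s,a_0)\bigr)^\top\delta a\ \ge\ -L_s\,t\,\|\delta a\|_2^2 .
\]
Integrating $-L_s t\|\delta a\|_2^2$ over $t\in[0,1]$ gives $-\tfrac{L_s}{2}\|\delta a\|_2^2$, which is exactly Equation~\eqref{eq:smoothbound}. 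The case $\delta a=0$ is trivial, since both sides vanish.

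The only delicate point is justifying the integral representation using only the local hypotheses. The Lipschitz condition is stated only on the segment, not globally, but that suffices, because every point $a_0+t\,\delta a$ and the base point $a_0$ lie on the segment. If one prefers to avoid integration, there is an alternative route. Apply the mean value theorem to $h(t)=g(t)-g(0)-t\,g'(0)+\tfrac{L_s}{2}t^2\|\delta a\|_2^2$. Since $h'(t)\ge0$ on $[0,1]$ by the same Lipschitz estimate, $h(1)\ge h(0)=0$, which yields the same inequality.
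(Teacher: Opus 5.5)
Your proposal is correct and follows essentially the same argument as the paper's proof. Both write the increment via the fundamental theorem of calculus along the segment, subtract $\nabla_a\bar Q_E(s,a_0)^\top\delta a$, bound the integrand below by $-L_s t\|\delta a\|_2^2$ using the Lipschitz gradient, and integrate; your added justification of the integral representation and the mean-value-theorem variant are valid but optional refinements.
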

To make the curvature penalty computable, define the detached endpoint gradients
\[
g_0=\sg(\nabla_a\bar Q_E(s,a_0)),\qquad
g_+=\sg(\nabla_a\bar Q_E(s,a_+)).
\]
Under a local quadratic approximation with Hessian $H_s$, the endpoint-gradient difference satisfies $g_+-g_0=H_s\delta a$. Hence,
\[
\left|\frac12\delta a^\top H_s\delta a\right|
\leq \frac12\|g_+-g_0\|_2\|\delta a\|_2.
\]
RAPO uses this computable penalty to discount the first-order update term, requiring two action gradients without forming a Hessian. This motivates the following surrogate score and execution loss:
\begin{align}
\label{eq:endpoint}
\widehat B_\pi(s)
&=\underbrace{g_0^\top\delta a}_{\text{update}}
-\underbrace{\frac12\|g_+-g_0\|_2\|\delta a\|_2}_{\text{conservatism}},\\
\label{eq:outere}
\mathcal L_E
&=-\frac{\E_{s\sim\Bout}\widehat B_\pi(s)}{\bar S_E}.
\end{align}
Minimizing $\mathcal L_E$ favors coefficients whose candidate updates have large first-order gains and small endpoint-gradient penalties. With $g_0$, $g_+$, and $\bar S_E$ detached, the coefficient gradient passes through $\delta a$. The penalty bounds the magnitude of a quadratic correction under the stated approximation; endpoint gradients alone do not establish a bound for a general critic. Appendix~\ref{app:return} identifies the additional critic and state-distribution errors involved in relating this score to return improvement.

\subsection{Bootstrap Objective: Target Stability}
\label{sec:bootstrap}

A bootstrap-policy update can move the critic's regression targets by changing the next-state actions in Equation~\eqref{eq:td3basetarget}. To isolate the effect of the policy update, we evaluate the current and candidate actions using the same fixed target critics.

For a transition $x=(s,a,r,s',d)$, let $d=1$ indicate termination and define
\[
y(\pi;x)=r+\gamma(1-d)\bar Q_{\min}(s',\pi(s')).
\]
The candidate-induced target change is
\begin{align}
\label{eq:targetchange}
\Delta y_B(x)
&=y(\widetilde\pi_B;x)-\sg\!\left(y(\pi_B;x)\right)\notag\\
&=\gamma(1-d)\left[
\bar Q_{\min}(s',\widetilde\pi_B(s'))-\sg\!\left(\bar Q_{\min}(s',\pi_B(s'))\right)
\right].
\end{align}
We penalize its root-mean-square magnitude, so positive and negative target changes cannot cancel:
\begin{equation}
\label{eq:outerb}
\mathcal L_B
=\sg(\alpha_B)\sqrt{\E_{x\sim\Bout}\!\left[
\left(\frac{\Delta y_B(x)}{\bar S_B}\right)^2\right]}.
\end{equation}

The critic loss in Equation~\eqref{eq:td3basetarget} depends on values at bootstrap actions. With the target critics fixed, $\Delta y_B(x)$ measures how an unsmoothed Bellman target for the same transition responds to a candidate update of the online bootstrap actor. Penalizing its RMS favors updates with smaller target sensitivity while the inner TD3+BC loss still trains the actor for policy improvement. The actual training target uses a smoothed target actor (Appendix~\ref{app:training}), so this objective is a proxy for target stability. Algorithm~\ref{alg:amo} combines the outer objectives with the base updates.

\begin{algorithm}[H]
\caption{RAPO (optimizer timing and schedules in Appendix~\ref{app:training})}
\label{alg:amo}
\noindent\textbf{Input:} Dataset $\D$; base method $m\in\{\mathrm{TD3+BC},\mathrm{IQL}\}$; active roles $\mathcal I(m)=\{E,B\}$ or $\{E\}$, respectively.
\begin{tabbing}
\quad\=\quad\=\quad\=\kill
Initialize actors, coefficients, and value/target networks.\\
\textbf{for} each training step \textbf{do}\\
\> Sample $\Bin$; apply the value updates of $m$ from Section~\ref{sec:baseupdates}.\\
\> \textbf{if} coefficient update is due \textbf{then}\\
\>\> Sample independent $\Bout$; form each role's candidate via \eqref{eq:virtual}.\\
\>\> Update each $c_i$ via \eqref{eq:meta}, using $\mathcal L_E$ in \eqref{eq:outere} or $\mathcal L_B$ in \eqref{eq:outerb}.\\
\> Apply the scheduled actor and target updates of $m$.\\
\textbf{end for}\\
\textbf{Output:} Trained value networks and actors $\{\pi_i:i\in\mathcal I(m)\}$; execute $\pi_E$.
\end{tabbing}
\end{algorithm}

\section{Experimental Results}
\label{sec:experiments}

\subsection{Evaluation Protocol}
\label{sec:evaluationprotocol}

We evaluate TD3+RAPO and IQL+RAPO against TD3+BC, IQL, wPC \citep{peng2023wpc}, A2PR \citep{liu2024a2pr}, and ASPC \citep{jing2026aspc} on fifteen D4RL tasks \citep{fu2020d4rl}: nine Locomotion tasks spanning HalfCheetah, Hopper, and Walker2d with medium, medium-replay, and medium-expert data, and six AntMaze tasks. The base actor-critic updates follow the CORL implementations \citep{tarasov2022corl}, with network architectures and training settings detailed in Appendix~\ref{app:settings}.

All final scores are measured at one million training steps: each of four independent training runs is evaluated over 50 episodes, and its episode mean gives one seed score. We report means and sample standard deviations across these four scores. Table~\ref{tab:benchmark} reports normalized returns for each task; domain and overall averages weight environments equally.

\input{tables/benchmark_results}

\subsection{Benchmark Performance}

TD3+RAPO averages \ResultTDThreeAMOLoco{} on Locomotion and \ResultTDThreeAMOAnt{} on AntMaze, compared with \ResultTDThreeBCLoco{} and \ResultTDThreeBCAnt{} for TD3+BC. IQL+RAPO averages \ResultIQLAMOLoco{} and \ResultIQLAMOAnt{}, compared with \ResultIQLLoco{} and \ResultIQLAnt{} for IQL. Both RAPO variants exceed their respective baselines in the two domain averages, and TD3+RAPO has the highest overall average among the compared methods.

\begin{figure}[H]
\centering
\includegraphics[width=\textwidth]{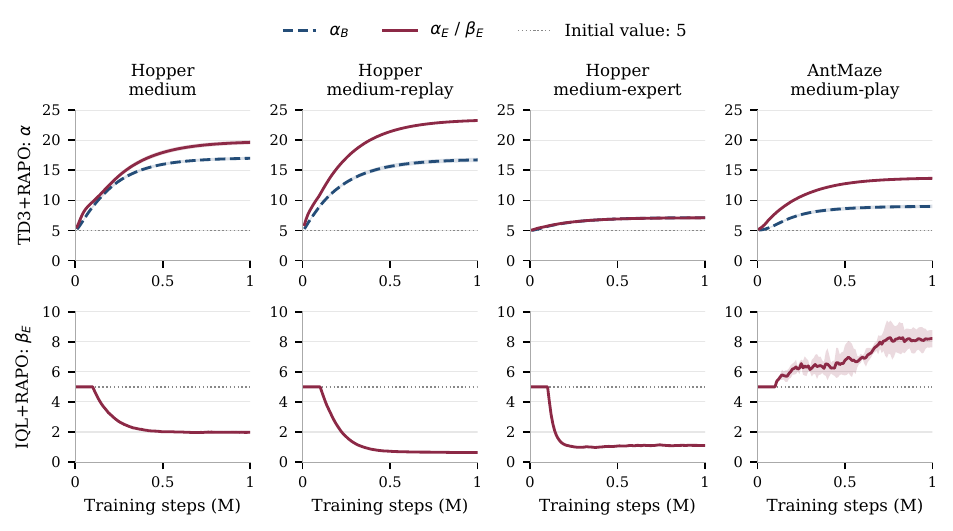}
\caption{Learned coefficients on three Hopper datasets and AntMaze-medium-play. The top row shows TD3+RAPO's $\alpha_B$ and $\alpha_E$; the bottom row shows IQL+RAPO's $\beta_E$. Curves and shaded bands show means and sample standard deviations across seeds. The dotted line marks the common initial value of five.}
\label{fig:coefficients}
\end{figure}

\Needspace{7\baselineskip}
\paragraph{Learned coefficient trajectories.}
Figure~\ref{fig:coefficients} shows the coefficients learned from equal initial values. On Hopper-medium, Hopper-medium-replay, and AntMaze-medium-play, TD3+RAPO's $\alpha_E$ rises above $\alpha_B$; on Hopper-medium-expert they remain close. In IQL+RAPO, $\beta_E$ decreases on the three Hopper datasets and increases on AntMaze-medium-play. Larger $\alpha_i$ in TD3+RAPO weakens cloning relative to Q maximization, whereas larger $\beta_E$ in IQL+RAPO increases the weight on higher-advantage dataset actions before clipping; their numerical values represent different scales.

\subsection{Execution-Objective Ablation}
\label{sec:ablations}

With the actor structure and bootstrap objective held fixed, we compare $\widehat B_\pi$ with two simpler scores to test the criterion used for execution-coefficient adaptation.

\textbf{Q} uses the direct change in the fixed execution critic defined in Section~\ref{sec:execution}, $f_Q(s)=\bar Q_E(s,a_+)-\sg(\bar Q_E(s,a_0))$, for the same candidate actions $a_0,a_+$.

\textbf{Linear} uses only the first-order gain from Equation~\eqref{eq:endpoint}, $f_{\mathrm{Linear}}(s)=g_0^\top\delta a$, omitting the endpoint-gradient penalty.

Each score defines an execution loss $-\E_{\Bout}[f(s)]/\bar S_E$, using the normalization and detachment rules in Appendix~\ref{app:differentiation}.

Q and Linear change only the execution score; they retain the two actors, RMS bootstrap objective, and coefficient learning rate of TD3+RAPO in each environment. All three conditions cover the fifteen benchmark tasks.

\input{tables/execution_score_domains}

Table~\ref{tab:execution-scores} shows that the full score $\widehat B_\pi$ in $\mathcal L_E$ has the highest means in both domains and overall. Its overall mean is \ExecutionFullOverall{}, compared with \ExecutionDirectQOverall{} for Q and \ExecutionFirstOrderOverall{} for Linear. The Linear control isolates the endpoint-gradient penalty; its lower averages suggest that this term helps select execution-coefficient updates. The full score also exceeds direct Q change under the same actor structure and bootstrap objective. These comparisons support $\mathcal L_E$ as an effective criterion for evaluating candidate execution updates in this setting.

\subsection{Adaptation During Training}
\label{sec:adaptation}

\paragraph{Bootstrap adaptation.}
To isolate bootstrap adaptation, we compare adapting $\alpha_B$ from five with fixing it at five. Both conditions retain two actors, adapt $\alpha_E$ from five, and use the same actor and critic settings and coefficient learning rate. Table~\ref{tab:bootstrap-domains} shows higher averages with bootstrap adaptation in both domains, particularly on AntMaze (\BootstrapAntMazeFixed{} to \BootstrapAntMazeAdaptive{}). The overall mean increases from \BootstrapOverallFixed{} to \BootstrapOverallAdaptive{}. Gains are particularly large on AntMaze-large-play (20.0 to 48.0) and AntMaze-large-diverse (13.0 to 51.0). Appendix~\ref{app:mechanism} reports the per-task results and a fixed-$\alpha_B=1$ control.

\input{tables/bootstrap_domains}

\paragraph{Fixed learned endpoints.}
We next test whether fixed coefficients reproduce adaptation throughout training. For each environment, we retrain a two-actor agent with both coefficients fixed to the corresponding four-seed means of TD3+RAPO's final coefficients.

Table~\ref{tab:frozen-final} shows a higher overall mean with adaptation (83.0 versus 78.4), with higher means in both domains. The per-task results in Appendix~\ref{app:mechanism} show that the Locomotion difference is concentrated in Hopper-medium-expert, while four of six AntMaze tasks improve. Learned final coefficient values therefore do not reproduce the adaptive domain averages when fixed from the start.

\input{tables/frozen_final_domains}

\section{Related Work}
\label{sec:related}

\paragraph{Behavior regularization.}
Offline RL must balance exploiting learned values against selecting actions outside dataset coverage \citep{levine2020offline}. TD3+BC applies a quadratic BC penalty to the TD3 actor \citep{fujimoto2018td3,fujimoto2021minimalist}, while CQL regularizes value estimates toward pessimism \citep{kumar2020cql}. ReBRAC combines architectural changes with separate regularization coefficients for actor learning and critic targets \citep{tarasov2023rebrac}. These interventions act at different points in learning. RAPO retains the base value-loss form and adapts actor-loss coefficients, including the coefficient of the actor supplying bootstrap actions.

\paragraph{Adaptive constraints and meta-gradients.}
Weighted Policy Constraints favor desirable dataset actions \citep{peng2023wpc}, A2PR constructs an advantage-guided behavior reference \citep{liu2024a2pr}, and selective regularization varies constraint strength across states \citep{luo2025selective}. Meta-gradients learn update parameters by differentiating through optimization \citep{xu2018metagradient,franceschi2018bilevel}. ASPC uses this mechanism for policy-constraint scales, including IQL's inverse temperature \citep{jing2026aspc}. Its outer loss includes the squared change in mean Q, $(\E\Delta Q)^2$. RAPO instead assigns local improvement to execution and per-transition target RMS, $\sqrt{\E(\Delta y)^2}$, to bootstrapping. The latter prevents opposite-signed target changes from canceling. The distinction is the role-specific evaluation criterion, not the use of meta-gradients or value-change regularization itself.

\paragraph{Decoupling value learning and execution.}
Separating the policies involved in value learning and deployment has several precedents. One-step RL estimates the behavior policy's value and then performs a regularized improvement step without repeatedly evaluating the improved policy \citep{brandfonbrener2021onestep}. IQL and XQL learn values from dataset actions without requiring the extracted actor in their offline Bellman updates \citep{kostrikov2022iql,garg2023xql}. MCEP retains an explicit target policy and trains a separate, more mildly constrained evaluation policy \citep{xu2024mcep}. RAPO learns how strongly to regularize each actor by evaluating candidate updates according to the actor's role in execution or value learning.

\paragraph{Value learning and policy extraction.}
The extraction procedure determines how learned values are converted into executable actions. IDQL relates IQL's value objective to an implicit actor and uses critic-weighted samples from a diffusion behavior model for extraction \citep{hansenestruch2023idql}. Experiments that vary value learning and extraction independently show that the extraction objective can substantially change performance even with the same value function \citep{park2024bottleneck}. Other decoupled methods rank behavior-model proposals at inference time \citep{lin2026decoupling}. RAPO instead learns an explicit execution actor during training. Its IQL instantiation retains advantage-weighted regression and adapts its inverse temperature using a local critic-improvement score; it does not change the policy class or introduce inference-time search.

\paragraph{Local policy improvement.}
Performance-difference and trust-region analyses connect action-value improvement to return change under assumptions on the critic and state distribution \citep{schulman2015trpo}. RAPO uses a single candidate update to assess a coefficient, retaining the base actor objective. Its endpoint-gradient score is a local surrogate for this assessment; the conditional smoothness bound and the additional errors separating that score from return improvement are stated in Section~\ref{sec:execution} and Appendix~\ref{app:approximation}.

\section{Conclusion}
\label{sec:conclusion}

RAPO learns a policy-update coefficient by testing a candidate update against a criterion for the policy's use. In TD3+BC, this gives separate execution and bootstrap actors: the former is evaluated by a local critic-improvement surrogate, the latter by its effect on critic targets. In IQL, value updates already use dataset actions, so only the execution policy's extraction coefficient is adapted.

Both instantiations improve over their respective baselines in domain-average return on the fifteen D4RL tasks. Within the two-actor TD3 setting, the full execution score and adapting the bootstrap coefficient achieve higher domain averages than the tested controls; fixed learned endpoint coefficients also yield lower averages than continued adaptation. These findings motivate choosing coefficient-learning criteria according to how policy actions enter the algorithm. The execution score remains a local surrogate, and its relation to return improvement depends on the learned critic and the state distribution.

\subsection*{AI use statement}
Generative AI tools assisted with discovering relevant literature; drafting, editing, and polishing portions of the manuscript; and developing and checking mathematical arguments, including the local critic-improvement lemma and its proof in Appendix~\ref{app:localproof}. The authors reviewed the AI-assisted work and take responsibility for the final text, citations, proofs, technical claims, and results.

\bibliographystyle{plainnat}
\bibliography{references}

\clearpage
\appendix

\section{Analysis of the Execution Objective}
\label{app:approximation}

\subsection{Proof of Lemma~\ref{lem:local}}
\label{app:localproof}

\begin{proof}
Fix $s$ and write $f(a)=\bar Q_E(s,a)$. The fundamental theorem of calculus and the Lipschitz condition give
\begin{align*}
f(a_+)-f(a_0)-\nabla f(a_0)^\top\delta a
&=\int_0^1\!\bigl[\nabla f(a_0+t\delta a)-\nabla f(a_0)\bigr]^\top\delta a\,\mathrm dt\\
&\geq-\int_0^1 L_s t\|\delta a\|_2^2\,\mathrm dt
=-\frac{L_s}{2}\|\delta a\|_2^2.
\end{align*}
\end{proof}

The smoothness premise can fail at activation boundaries or, for IQL's minimum critic, where the minimizing Q-network switches.

\paragraph{Quadratic interpretation of the conservatism term.}
If $\bar Q_E(s,a)$ is quadratic in $a$ with symmetric Hessian $H_s$, then $g_+-g_0=H_s\delta a$ and
\begin{align*}
\bar Q_E(s,a_+)-\bar Q_E(s,a_0)
&=g_0^\top\delta a+\tfrac12\delta a^\top H_s\delta a,\\
\left|\tfrac12\delta a^\top H_s\delta a\right|
&\leq\tfrac12\|H_s\delta a\|_2\|\delta a\|_2
=\tfrac12\|g_+-g_0\|_2\|\delta a\|_2.
\end{align*}
Thus Equation~\eqref{eq:endpoint} subtracts an upper bound on the magnitude of the quadratic correction in this special case. For a general critic, endpoint variation need not control interior curvature; the score is not a certified lower bound. This curvature penalty also does not quantify uncertainty in the critic.

\subsection{Relation to Return Improvement}
\label{app:return}

For deterministic policies $\pi$ and $\pi^+$ with a common initial-state distribution, the performance-difference identity is \citep{schulman2015trpo}
\begin{equation}
\label{eq:pdl}
J(\pi^+)-J(\pi)
=\frac{1}{1-\gamma}\E_{s\sim d^{\pi^+}}
\left[Q^\pi(s,\pi^+(s))-Q^\pi(s,\pi(s))\right],
\end{equation}
where $d^{\pi^+}$ is normalized discounted state occupancy. To relate this identity to RAPO, set $\pi=\pi_E$, $\pi^+=\widetilde\pi_E$, let $\nu$ be the dataset state distribution, and define $\Delta_E(s)=\bar Q_E(s,\pi^+(s))-\bar Q_E(s,\pi(s))$. Assuming integrability, adding and subtracting $\Delta_E$ gives the exact decomposition
\begin{align}
\label{eq:decomposition}
(1-\gamma)\bigl[J(\pi^+)-J(\pi)\bigr]
&=\E_\nu\widehat B_\pi+\mathcal R_{\mathrm{local}}
+\mathcal R_{\mathrm{critic}}+\mathcal R_{\mathrm{state}},\\
\mathcal R_{\mathrm{local}}
&=\E_\nu\left[\Delta_E-\widehat B_\pi\right],\notag\\
\mathcal R_{\mathrm{critic}}
&=\E_{d^{\pi^+}}\left[
Q^\pi(s,\pi^+(s))-Q^\pi(s,\pi(s))-\Delta_E(s)\right],\notag\\
\mathcal R_{\mathrm{state}}
&=\E_{d^{\pi^+}}\Delta_E-\E_\nu\Delta_E.\notag
\end{align}
These residuals capture the endpoint approximation, critic error relative to $Q^\pi$ (including reference-policy mismatch), and the use of dataset states. RAPO does not bound them. Even if the learned critic is smooth, endpoint gradients need not control interior curvature; hence the execution score does not guarantee return improvement.

\section{Algorithm Instantiations and Implementation Details}
\label{app:settings}
\label{sec:instantiations}

\subsection{Differentiation and Normalization}
\label{app:differentiation}

The optimizer updates an unconstrained scalar $\rho_i$ for each active role, with the coefficient $c_i$ from Section~\ref{sec:roles}:
\[
\begin{array}{lll}
\text{TD3+RAPO:} & c_i=\alpha_i=\softplus(\rho_i), & i\in\{B,E\},\\
\text{IQL+RAPO:} & c_E=\beta_E=\exp(\rho_E), &
\end{array}
\qquad
\frac{\mathrm d\mathcal L_i}{\mathrm d\rho_i}
=\frac{\mathrm dc_i}{\mathrm d\rho_i}
\frac{\mathrm d\mathcal L_i}{\mathrm dc_i}.
\]
Each $\rho_i$ has a separate Adam state. The second factor follows Equation~\eqref{eq:meta} with the specified stop-gradient operations.

The candidate step leaves real actor parameters and optimizer state unchanged. Starting actor parameters, critic parameters, dataset-action references, and incoming optimizer states are constants for coefficient differentiation; new gradients and optimizer moments remain differentiable in $c_i$. Detached quantities are recomputed at each evaluation.

Using the critics from Sections~\ref{sec:execution} and~\ref{sec:bootstrap}, the detached value scales are defined as follows, with $\epsilon=10^{-6}$:
\begin{equation}
\label{eq:executionscale}
\bar S_E=\sg\!\left(\max\left\{\E_{\Bout}|\bar Q_E(s,\widetilde\pi_E(s))|,\epsilon\right\}\right),
\end{equation}
\begin{equation}
\label{eq:bootstrapnormalizer}
\bar S_B=\sg\!\left(\E_{\Bout}|\bar Q_{\min}(s,\widetilde\pi_B(s))|\right)+\epsilon.
\end{equation}
The bootstrap prefactor $\sg(\alpha_B)$ scales the coefficient gradient without adding a direct derivative path; dependence on $\alpha_B$ enters through the candidate policy. The bootstrap RMS includes terminal transitions as zeros and has no additive epsilon inside the square root; its value and chosen subgradient are zero when all inputs are zero.

\subsection{Training Settings}
\label{app:training}

The actor-critic implementations follow CORL \citep{tarasov2022corl}. For the TD3+BC, wPC, and ASPC comparisons, we use the same Q-network architecture as TD3+RAPO: three hidden layers of width 256 with ReLU activations and LayerNorm in each hidden layer.

Both actual and candidate TD3 actor updates use the loss scale in Equation~\eqref{eq:td3inner}; multiplying it by $\alpha_i$ changes finite updates and their meta-gradients. The training critic uses the target bootstrap actor with clipped Gaussian action noise ($\sigma=0.2$, $c=0.5$), whereas Equation~\eqref{eq:targetchange} compares unsmoothed online actors. Both methods mask the next-state value by $1-d$, where $d$ indicates termination.

IQL clips $\exp(\beta_E A)$ at 100. Gaussian actor means are used in the execution score and at evaluation.

Table~\ref{tab:methodsettings} summarizes RAPO's training settings. At initialization five, we select each method's coefficient learning rate from $\{3\times10^{-4},10^{-3},2\times10^{-3}\}$ by its four-seed mean in each environment. Table~\ref{tab:coefficient-rates} lists the selected rates.
\begin{table}[htbp]
\centering
\caption{Training settings. Intervals are measured in value-update steps; $\bar Q\leftarrow(1-\tau_Q)\bar Q+\tau_Q Q$ defines the target-update rate.}
\label{tab:methodsettings}
\begin{tabularx}{\textwidth}{l>{\centering\arraybackslash}X>{\centering\arraybackslash}X}
\toprule
Setting & TD3+RAPO & IQL+RAPO \\
\midrule
Actor hidden layers & $2\times256$ & $2\times256$ \\
Value hidden layers & Q: $3\times256$, LayerNorm & Q and V: $2\times256$, no LayerNorm \\
Network optimizer / initial rate & Adam / $3\times10^{-4}$ & Adam / $3\times10^{-4}$ \\
Actor learning-rate schedule & Constant & Cosine decay \\
Candidate optimizer & E: current-state Adam; B: SGD & E: current-state Adam \\
Actual actor optimizer / interval & Adam / 2 & Adam / 1 \\
Coefficient initialization & $\alpha_B=\alpha_E=\SelectedTDThreeAMOInit{}$ & $\beta_E=\SelectedIQLAMOInit{}$ \\
Coefficient update interval & 20 & 20, after 100,000 steps \\
Coefficient rate schedule & Exponential to $0.01$ of initial rate & Constant \\
Coefficient range & $\alpha_i>0$ & $\beta_E\in[0.05,100]$ \\
Target-update rate $\tau_Q$ & $0.005$ & $0.005$ \\
\bottomrule
\end{tabularx}
\end{table}

\input{tables/coefficient_rates}

Candidate execution steps use the actor's current Adam moments and learning rate. The bootstrap candidate uses SGD at the actor learning rate; the actual bootstrap update uses Adam. Network Adam uses moments $(0.9,0.999)$ and epsilon $10^{-8}$. TD3 coefficient Adam uses the same moments; IQL coefficient Adam uses $(0,0.999)$, with $\rho_E$ projected to $[\log0.05,\log100]$ after each update without resetting its moments.

For both instantiations, the actual execution update uses the coefficient cached before that step's meta-update. TD3 first updates its critics, then, on scheduled steps, updates $\rho_E$ and $\rho_B$, applies both real actor updates (using the new $\alpha_B$), and updates its target networks. The TD3 bootstrap target actor uses $\bar\theta_B\leftarrow(1-\tau_Q)\bar\theta_B+\tau_Q\theta_B$ with the same rate as its target critics. IQL caches the advantage and next-state value, updates V, Q, and target Q, performs the scheduled execution meta-update, and applies its real execution update.

Both instantiations use batch size 256, discount 0.99, normalized observations, and unchanged Locomotion rewards. TD3's smoothed target actions are clipped to $[-1,1]$.

IQL+RAPO uses Gaussian actors and an expectile of 0.7 on Locomotion and 0.9 on AntMaze. AntMaze rewards are shifted by $-1$. Actor means use tanh, and log standard deviations are state-independent and bounded in $[-20,2]$.

\paragraph{IQL baseline.}
The benchmark baseline uses the same actor family, target-update rate, expectiles, and reward handling, with fixed $\beta=3$ on Locomotion and $\beta=10$ on AntMaze.

\subsection{Evaluation and Configuration Selection}
\label{app:evaluation}
\label{app:selection}

The final-score and uncertainty protocol is defined in Section~\ref{sec:evaluationprotocol}. For domain uncertainty, environments are averaged within each seed before computing the sample standard deviation across seeds.

\subsection{Coefficient Learning-Rate Sensitivity}
\label{app:lr-sensitivity}

Table~\ref{tab:coefficient-lr-sensitivity} tests TD3+RAPO with each initial coefficient learning rate shared across environments; other settings follow Table~\ref{tab:methodsettings}. It exceeds TD3+BC's domain averages at all three rates, though task-level performance varies. At $3\times10^{-4}$, its Locomotion and AntMaze means are 87.4 and 64.2, compared with \ResultTDThreeBCLoco{} and \ResultTDThreeBCAnt{} for TD3+BC.

\input{tables/coefficient_lr_sensitivity}

\subsection{Outer-Objective and Coefficient Ablations}
\label{app:mechanism}

\paragraph{Bootstrap adaptation at main-selected settings.}
Table~\ref{tab:matched-bootstrap} extends the fixed-$\alpha_B=5$ comparison to fixed and adaptive values of one. All four conditions retain two actors, adapt $\alpha_E$ from five, and use TD3+RAPO's actor and critic settings and selected coefficient learning rate.

Figure~\ref{fig:fixed-bootstrap} extends the domain summary in Table~\ref{tab:bootstrap-domains}. Adapting $\alpha_B$ from one raises the Locomotion mean from \BootstrapLocomotionFixedOne{} to \BootstrapLocomotionAdaptiveOne{} and the AntMaze mean from \BootstrapAntMazeFixedOne{} to \BootstrapAntMazeAdaptiveOne{} relative to fixing it at one. Bootstrap adaptation therefore improves both domain averages at either initial value. Initialization still matters, especially on AntMaze, where adaptation from five reaches \BootstrapAntMazeAdaptive{} compared with \BootstrapAntMazeAdaptiveOne{} from one.

\begin{figure}[htbp]
\centering
\includegraphics[width=\textwidth]{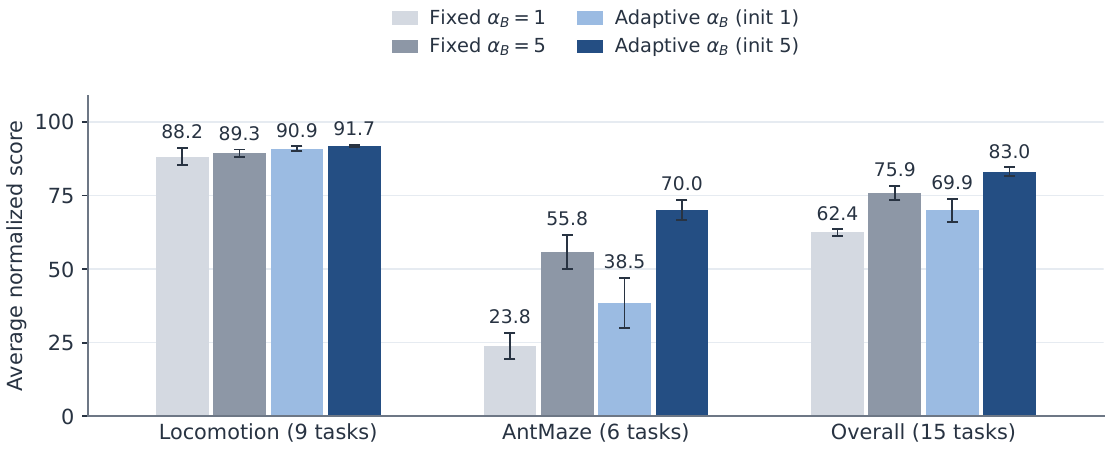}
\caption{Fixed and adaptive bootstrap regularization at TD3+RAPO's selected learning rates. Bars show domain or overall task averages; error bars show standard deviations across seed-wise task averages. All conditions adapt $\alpha_E$ from five; adaptive $\alpha_B$ starts at one or five. Per-task results appear in Table~\ref{tab:matched-bootstrap}.}
\label{fig:fixed-bootstrap}
\end{figure}

\input{tables/matched_bootstrap}

\clearpage
\paragraph{Fixed learned endpoints.}
For each environment, we use the mean final coefficients of the four TD3+RAPO seeds as a fixed pair, rounded to six decimal places for training. We train new actors and critics with coefficient updates disabled. Table~\ref{tab:frozen-final-tasks} displays the constants and per-task results; the adaptive column uses the main benchmark.

\input{tables/frozen_final_tasks}

\end{document}

%% file: tables/results_macros.tex
\newcommand{\ResultTDThreeBCLoco}{80.8}
\newcommand{\ResultIQLLoco}{78.6}

\newcommand{\ResultTDThreeAMOLoco}{91.7}
\newcommand{\ResultIQLAMOLoco}{81.2}
\newcommand{\ResultTDThreeBCAnt}{31.7}
\newcommand{\ResultIQLAnt}{55.4}

\newcommand{\ResultTDThreeAMOAnt}{70.0}
\newcommand{\ResultIQLAMOAnt}{62.1}

\newcommand{\SelectedTDThreeAMOInit}{5}

\newcommand{\SelectedIQLAMOInit}{5}

%% file: tables/bootstrap_macros.tex
\newcommand{\BootstrapLocomotionFixedOne}{88.2}

\newcommand{\BootstrapAntMazeFixedOne}{23.8}
\newcommand{\BootstrapAntMazeFixed}{55.8}
\newcommand{\BootstrapAntMazeAdaptive}{70.0}

\newcommand{\BootstrapOverallFixed}{75.9}
\newcommand{\BootstrapOverallAdaptive}{83.0}

%% file: tables/adaptive_bootstrap_macros.tex
\newcommand{\BootstrapLocomotionAdaptiveOne}{90.9}
\newcommand{\BootstrapAntMazeAdaptiveOne}{38.5}

%% file: tables/execution_score_macros.tex
\newcommand{\ExecutionDirectQOverall}{78.8}
\newcommand{\ExecutionFirstOrderOverall}{79.4}
\newcommand{\ExecutionFullOverall}{83.0}

%% file: tables/benchmark_results.tex
\begin{table}[H]
\centering
\caption{D4RL performance at one million training steps, reported as mean $\pm$ sample standard deviation across four training seeds. Initial coefficients are fixed at five for both RAPO methods. Hyperparameters are reported in Appendix~\ref{app:training}. Average rows give equal weight to environments; the final row averages all fifteen tasks. Bold marks the highest mean in each row, including ties.}
\label{tab:benchmark}
\begingroup
\setlength{\tabcolsep}{2pt}
\renewcommand{\arraystretch}{1.12}
\resizebox{\textwidth}{!}{%
\begin{tabular}{l*{7}{c}}
\toprule
Dataset & TD3+BC & IQL & wPC & A2PR & ASPC & TD3+RAPO & IQL+RAPO \\
\midrule
HalfCheetah-m & $50.0\!\pm\!0.7$ & $48.4\!\pm\!0.2$ & $54.7\!\pm\!0.4$ & $55.9\!\pm\!0.6$ & $57.8\!\pm\!0.7$ & $\boldsymbol{58.2\!\pm\!0.5}$ & $48.6\!\pm\!0.2$ \\
HalfCheetah-mr & $46.2\!\pm\!0.5$ & $44.2\!\pm\!0.4$ & $48.2\!\pm\!0.4$ & $48.0\!\pm\!1.1$ & $\boldsymbol{50.1\!\pm\!1.3}$ & $49.7\!\pm\!0.7$ & $44.6\!\pm\!0.5$ \\
HalfCheetah-me & $99.5\!\pm\!1.0$ & $89.5\!\pm\!3.9$ & $100.0\!\pm\!0.5$ & $98.9\!\pm\!1.7$ & $92.2\!\pm\!12.8$ & $\boldsymbol{100.6\!\pm\!3.0}$ & $92.8\!\pm\!0.6$ \\
Hopper-m & $61.5\!\pm\!2.7$ & $63.0\!\pm\!2.5$ & $83.9\!\pm\!6.8$ & $83.4\!\pm\!2.5$ & $81.9\!\pm\!5.2$ & $\boldsymbol{101.8\!\pm\!0.2}$ & $64.4\!\pm\!5.8$ \\
Hopper-mr & $78.8\!\pm\!34.2$ & $87.6\!\pm\!8.4$ & $101.2\!\pm\!0.6$ & $100.1\!\pm\!1.1$ & $99.6\!\pm\!4.0$ & $\boldsymbol{101.5\!\pm\!0.6}$ & $94.1\!\pm\!15.4$ \\
Hopper-me & $111.4\!\pm\!2.0$ & $99.8\!\pm\!24.0$ & $103.5\!\pm\!7.4$ & $\boldsymbol{112.6\!\pm\!0.5}$ & $109.2\!\pm\!2.7$ & $111.0\!\pm\!1.3$ & $108.3\!\pm\!5.1$ \\
Walker2d-m & $83.7\!\pm\!2.8$ & $81.4\!\pm\!3.5$ & $88.9\!\pm\!1.4$ & $89.3\!\pm\!1.3$ & $89.5\!\pm\!15.4$ & $\boldsymbol{93.8\!\pm\!1.4}$ & $83.2\!\pm\!2.2$ \\
Walker2d-mr & $86.7\!\pm\!5.6$ & $81.5\!\pm\!4.5$ & $93.5\!\pm\!3.5$ & $91.5\!\pm\!5.1$ & $90.6\!\pm\!9.4$ & $\boldsymbol{98.0\!\pm\!1.4}$ & $82.4\!\pm\!6.5$ \\
Walker2d-me & $109.7\!\pm\!0.7$ & $111.8\!\pm\!1.2$ & $110.3\!\pm\!0.5$ & $112.0\!\pm\!0.3$ & $110.5\!\pm\!0.2$ & $111.1\!\pm\!1.2$ & $\boldsymbol{112.5\!\pm\!0.8}$ \\
\midrule
Locomotion average & $80.8$ & $78.6$ & $87.1$ & $88.0$ & $86.8$ & $\boldsymbol{91.7}$ & $81.2$ \\
\midrule
Ant-umaze & $90.0\!\pm\!8.2$ & $76.0\!\pm\!14.0$ & $\boldsymbol{92.5\!\pm\!9.6}$ & $\boldsymbol{92.5\!\pm\!5.0}$ & $88.0\!\pm\!3.7$ & $\boldsymbol{92.5\!\pm\!6.4}$ & $77.5\!\pm\!10.8$ \\
Ant-umaze-diverse & $87.5\!\pm\!5.0$ & $58.0\!\pm\!5.9$ & $80.0\!\pm\!8.2$ & $55.0\!\pm\!41.2$ & $82.5\!\pm\!4.7$ & $\boldsymbol{88.0\!\pm\!6.9}$ & $63.0\!\pm\!7.0$ \\
Ant-medium-play & $2.5\!\pm\!5.0$ & $63.5\!\pm\!6.2$ & $70.0\!\pm\!14.1$ & $45.0\!\pm\!30.0$ & $76.0\!\pm\!12.4$ & $\boldsymbol{81.5\!\pm\!3.0}$ & $73.0\!\pm\!6.8$ \\
Ant-medium-diverse & $10.0\!\pm\!8.2$ & $66.0\!\pm\!13.1$ & $60.0\!\pm\!14.1$ & $47.5\!\pm\!9.6$ & $49.5\!\pm\!22.3$ & $59.0\!\pm\!17.2$ & $\boldsymbol{73.0\!\pm\!11.8}$ \\
Ant-large-play & $0.0\!\pm\!0.0$ & $35.0\!\pm\!10.4$ & $32.5\!\pm\!5.0$ & $15.0\!\pm\!12.9$ & $\boldsymbol{53.5\!\pm\!19.7}$ & $48.0\!\pm\!6.9$ & $45.5\!\pm\!5.7$ \\
Ant-large-diverse & $0.0\!\pm\!0.0$ & $34.0\!\pm\!11.0$ & $45.0\!\pm\!17.3$ & $15.0\!\pm\!17.3$ & $32.5\!\pm\!20.9$ & $\boldsymbol{51.0\!\pm\!5.3}$ & $40.5\!\pm\!4.4$ \\
\midrule
AntMaze average & $31.7$ & $55.4$ & $63.3$ & $45.0$ & $63.7$ & $\boldsymbol{70.0}$ & $62.1$ \\
\midrule
Overall average & $61.2$ & $69.3$ & $77.6$ & $70.8$ & $77.6$ & $\boldsymbol{83.0}$ & $73.5$ \\
\bottomrule
\end{tabular}%
}
\endgroup
\end{table}

%% file: tables/execution_score_domains.tex
\begin{table}[htbp]
\centering
\caption{Execution-score ablation at the selected coefficient learning rates. Q uses target-critic value change; Linear uses the first-order gain. Entries are equal-weight task means within each domain and overall. Bold marks the highest mean.}
\label{tab:execution-scores}
\begin{tabular}{lccc}
\toprule
Domain & Q & Linear & $\widehat B_\pi$ (Ours) \\
\midrule
Locomotion & $87.0$ & $88.7$ & $\mathbf{91.7}$ \\
AntMaze & $66.6$ & $65.5$ & $\mathbf{70.0}$ \\
Overall & $78.8$ & $79.4$ & $\mathbf{83.0}$ \\
\bottomrule
\end{tabular}
\end{table}

%% file: tables/bootstrap_domains.tex
\begin{table}[htbp]
\centering
\caption{Bootstrap-coefficient adaptation with $\alpha_E$ adapted in both conditions. The control fixes $\alpha_B=5$; TD3+RAPO adapts it from five. Entries are equal-weight task means within each domain and overall. Per-task results appear in Table~\ref{tab:matched-bootstrap}.}
\label{tab:bootstrap-domains}
\begin{tabular}{lcc}
\toprule
Domain & Fixed $\alpha_B=5$ & TD3+RAPO \\
\midrule
Locomotion & $89.3$ & $91.7$ \\
AntMaze & $55.8$ & $70.0$ \\
Overall & $75.9$ & $83.0$ \\
\bottomrule
\end{tabular}
\end{table}

%% file: tables/frozen_final_domains.tex
\begin{table}[htbp]
\centering
\caption{Fixing learned endpoint coefficients from the start versus adapting from five. Both conditions retain separate TD3 actors. Scores are equal-weight task means; fixed constants and per-task results appear in Table~\ref{tab:frozen-final-tasks}.}
\label{tab:frozen-final}
\begin{tabular}{lcc}
\toprule
Domain & Fixed endpoints & TD3+RAPO \\
\midrule
Locomotion & $88.7$ & $91.7$ \\
AntMaze & $62.9$ & $70.0$ \\
Overall & $78.4$ & $83.0$ \\
\bottomrule
\end{tabular}
\end{table}

%% file: tables/coefficient_rates.tex
\begin{table}[htbp]
\centering
\caption{Main-selected coefficient learning rates with initialization five. TD3 uses the listed rate for both coefficients; its fixed-bootstrap control reuses the same rate for $\alpha_E$. Fixed-endpoint TD3 has no active coefficient update.}
\label{tab:coefficient-rates}
\begin{tabular}{lcc}
\toprule
Environment & TD3+RAPO & IQL+RAPO \\
\midrule
HalfCheetah-medium & $2\times10^{-3}$ & $3\times10^{-4}$ \\
HalfCheetah-medium-replay & $2\times10^{-3}$ & $10^{-3}$ \\
HalfCheetah-medium-expert & $10^{-3}$ & $10^{-3}$ \\
Hopper-medium & $2\times10^{-3}$ & $3\times10^{-4}$ \\
Hopper-medium-replay & $2\times10^{-3}$ & $3\times10^{-4}$ \\
Hopper-medium-expert & $3\times10^{-4}$ & $10^{-3}$ \\
Walker2d-medium & $10^{-3}$ & $3\times10^{-4}$ \\
Walker2d-medium-replay & $2\times10^{-3}$ & $10^{-3}$ \\
Walker2d-medium-expert & $2\times10^{-3}$ & $10^{-3}$ \\
AntMaze-umaze & $3\times10^{-4}$ & $10^{-3}$ \\
AntMaze-umaze-diverse & $3\times10^{-4}$ & $3\times10^{-4}$ \\
AntMaze-medium-play & $10^{-3}$ & $2\times10^{-3}$ \\
AntMaze-medium-diverse & $3\times10^{-4}$ & $3\times10^{-4}$ \\
AntMaze-large-play & $10^{-3}$ & $2\times10^{-3}$ \\
AntMaze-large-diverse & $10^{-3}$ & $10^{-3}$ \\
\bottomrule
\end{tabular}
\end{table}

%% file: tables/coefficient_lr_sensitivity.tex
\begin{table}[htbp]
\centering
\setlength{\tabcolsep}{2pt}
\caption{TD3+RAPO sensitivity to the initial coefficient learning rate, with coefficient initialization five. Average rows weight environments equally.}
\label{tab:coefficient-lr-sensitivity}
\input{tables/coefficient_lr_sensitivity_body}
\end{table}

%% file: tables/coefficient_lr_sensitivity_body.tex
\begin{tabular*}{\textwidth}{@{\extracolsep{\fill}}lccc@{}}
\toprule
Environment & $3\!\times\!10^{-4}$ & $10^{-3}$ & $2\!\times\!10^{-3}$ \\
\midrule
HalfCheetah-medium & $54.3 \pm 0.6$ & $56.4 \pm 0.7$ & $58.2 \pm 0.5$ \\
HalfCheetah-medium-replay & $47.9 \pm 0.5$ & $49.3 \pm 0.3$ & $49.7 \pm 0.7$ \\
HalfCheetah-medium-expert & $98.1 \pm 2.8$ & $100.6 \pm 3.0$ & $98.0 \pm 3.9$ \\
Hopper-medium & $83.9 \pm 11.2$ & $96.4 \pm 5.7$ & $101.8 \pm 0.2$ \\
Hopper-medium-replay & $96.7 \pm 6.4$ & $99.8 \pm 2.6$ & $101.5 \pm 0.6$ \\
Hopper-medium-expert & $111.0 \pm 1.3$ & $79.1 \pm 20.3$ & $45.4 \pm 5.5$ \\
Walker2d-medium & $89.6 \pm 1.7$ & $93.8 \pm 1.4$ & $88.7 \pm 12.7$ \\
Walker2d-medium-replay & $95.0 \pm 2.5$ & $94.0 \pm 4.5$ & $98.0 \pm 1.4$ \\
Walker2d-medium-expert & $110.0 \pm 0.5$ & $110.8 \pm 0.7$ & $111.0 \pm 1.2$ \\
\midrule
Locomotion Avg. & 87.4 & 86.7 & 83.6 \\
\midrule
AntMaze-umaze & $92.5 \pm 6.4$ & $88.0 \pm 4.3$ & $85.0 \pm 7.4$ \\
AntMaze-umaze-diverse & $88.0 \pm 6.9$ & $35.5 \pm 29.9$ & $77.0 \pm 12.7$ \\
AntMaze-medium-play & $72.5 \pm 14.9$ & $81.5 \pm 3.0$ & $71.0 \pm 9.5$ \\
AntMaze-medium-diverse & $59.0 \pm 17.2$ & $43.0 \pm 20.0$ & $26.5 \pm 19.8$ \\
AntMaze-large-play & $29.5 \pm 13.4$ & $48.0 \pm 6.9$ & $46.5 \pm 12.6$ \\
AntMaze-large-diverse & $43.5 \pm 5.3$ & $51.0 \pm 5.3$ & $30.0 \pm 14.6$ \\
\midrule
AntMaze Avg. & 64.2 & 57.8 & 56.0 \\
\midrule
Overall Avg. & 78.1 & 75.1 & 72.6 \\
\bottomrule
\end{tabular*}

%% file: tables/matched_bootstrap.tex
\begin{table}[htbp]
\centering
\setlength{\tabcolsep}{4pt}
\caption{Fixed and adaptive bootstrap regularization at the selected learning rates. All conditions retain two actors and adapt $\alpha_E$ from five. Under Fixed, column numbers are constant $\alpha_B$ values; under Adaptive, they are initial values.}
\label{tab:matched-bootstrap}
\begin{tabular}{lcccc}
\toprule
Environment & \multicolumn{2}{c}{Fixed $\alpha_B$} & \multicolumn{2}{c}{Adaptive $\alpha_B$} \\
\cmidrule(lr){2-3}\cmidrule(lr){4-5}
 & 1 & 5 & 1 & 5 \\
\midrule
HalfCheetah-medium & $56.6\pm0.9$ & $59.1\pm0.6$ & $58.1\pm1.0$ & $58.2\pm0.5$ \\
HalfCheetah-medium-replay & $48.7\pm2.4$ & $50.0\pm0.6$ & $49.1\pm0.5$ & $49.7\pm0.7$ \\
HalfCheetah-medium-expert & $103.1\pm3.6$ & $98.8\pm3.7$ & $95.3\pm5.7$ & $100.6\pm3.0$ \\
Hopper-medium & $88.5\pm24.2$ & $101.7\pm0.5$ & $101.9\pm0.7$ & $101.8\pm0.2$ \\
Hopper-medium-replay & $98.7\pm1.3$ & $101.3\pm1.2$ & $100.5\pm1.5$ & $101.5\pm0.6$ \\
Hopper-medium-expert & $111.0\pm2.3$ & $99.1\pm10.1$ & $112.5\pm0.2$ & $111.0\pm1.3$ \\
Walker2d-medium & $85.0\pm0.2$ & $90.3\pm3.1$ & $93.2\pm1.8$ & $93.8\pm1.4$ \\
Walker2d-medium-replay & $91.9\pm1.8$ & $92.6\pm9.7$ & $96.5\pm3.6$ & $98.0\pm1.4$ \\
Walker2d-medium-expert & $110.2\pm0.8$ & $110.6\pm0.8$ & $110.9\pm0.8$ & $111.1\pm1.2$ \\
\midrule
Locomotion average & $88.2$ & $89.3$ & $90.9$ & $91.7$ \\
\midrule
AntMaze-umaze & $84.5\pm4.4$ & $90.0\pm4.3$ & $90.0\pm1.6$ & $92.5\pm6.4$ \\
AntMaze-umaze-diverse & $58.5\pm26.3$ & $88.5\pm12.0$ & $57.5\pm39.5$ & $88.0\pm6.9$ \\
AntMaze-medium-play & $0.0\pm0.0$ & $62.0\pm5.4$ & $46.5\pm17.0$ & $81.5\pm3.0$ \\
AntMaze-medium-diverse & $0.0\pm0.0$ & $61.5\pm16.3$ & $6.5\pm6.0$ & $59.0\pm17.2$ \\
AntMaze-large-play & $0.0\pm0.0$ & $20.0\pm4.3$ & $24.5\pm15.3$ & $48.0\pm6.9$ \\
AntMaze-large-diverse & $0.0\pm0.0$ & $13.0\pm8.9$ & $6.0\pm9.5$ & $51.0\pm5.3$ \\
\midrule
AntMaze average & $23.8$ & $55.8$ & $38.5$ & $70.0$ \\
Overall average & $62.4$ & $75.9$ & $69.9$ & $83.0$ \\
\bottomrule
\end{tabular}
\end{table}

%% file: tables/frozen_final_tasks.tex
\begin{table}[H]
\centering
\setlength{\tabcolsep}{4pt}
\caption{Endpoint-fixed TD3 control. Each coefficient pair averages the final values of the main runs and remains fixed throughout the control runs. Constants are displayed to one decimal place; training uses six decimal places. TD3+RAPO uses the main benchmark runs.}
\label{tab:frozen-final-tasks}
\begin{tabular}{lcccc}
\toprule
Environment & Fixed $\alpha_E$ & Fixed $\alpha_B$ & Fixed endpoints & TD3+RAPO \\
\midrule
HalfCheetah-medium & 19.4 & 13.4 & $58.3\pm0.4$ & $58.2\pm0.5$ \\
HalfCheetah-medium-replay & 18.8 & 17.4 & $49.9\pm0.9$ & $49.7\pm0.7$ \\
HalfCheetah-medium-expert & 12.6 & 10.2 & $98.8\pm5.4$ & $100.6\pm3.0$ \\
Hopper-medium & 19.6 & 17.0 & $101.9\pm0.4$ & $101.8\pm0.2$ \\
Hopper-medium-replay & 23.3 & 16.7 & $101.0\pm1.1$ & $101.5\pm0.6$ \\
Hopper-medium-expert & 7.1 & 7.1 & $83.8\pm20.3$ & $111.0\pm1.3$ \\
Walker2d-medium & 11.8 & 9.8 & $95.5\pm6.8$ & $93.8\pm1.4$ \\
Walker2d-medium-replay & 20.7 & 18.2 & $98.2\pm2.6$ & $98.0\pm1.4$ \\
Walker2d-medium-expert & 19.3 & 12.8 & $111.1\pm0.5$ & $111.1\pm1.2$ \\
AntMaze-umaze & 7.8 & 6.7 & $95.5\pm3.4$ & $92.5\pm6.4$ \\
AntMaze-umaze-diverse & 7.5 & 6.3 & $61.5\pm35.9$ & $88.0\pm6.9$ \\
AntMaze-medium-play & 13.7 & 9.0 & $75.5\pm9.1$ & $81.5\pm3.0$ \\
AntMaze-medium-diverse & 7.7 & 6.0 & $49.0\pm17.3$ & $59.0\pm17.2$ \\
AntMaze-large-play & 13.5 & 9.2 & $53.0\pm6.8$ & $48.0\pm6.9$ \\
AntMaze-large-diverse & 13.7 & 9.3 & $43.0\pm13.2$ & $51.0\pm5.3$ \\
\bottomrule
\end{tabular}
\end{table}